\documentclass[letterpaper, 10 pt, conference]{format/ieeeconf}  

\IEEEoverridecommandlockouts                              

\overrideIEEEmargins                                      


\usepackage{dsfont}
\usepackage{marvosym}       

\usepackage{amsmath}

\usepackage{graphics} 
\usepackage{amsmath} 
\usepackage{amssymb}  
\usepackage{todonotes}

\usepackage{amsfonts}
\usepackage{mathtools}
\usepackage[linesnumbered,algoruled,boxed,vlined, noend]{algorithm2e}
\usepackage{amsthm}
\usepackage{comment}
\usepackage{cite}
\usepackage{wrapfig}

\usepackage{enumitem}
\usepackage{overpic}
\usepackage{xspace}
\usepackage{multirow}

\DeclareMathAlphabet{\mathcal}{OMS}{cmsy}{m}{n} 

\newtheorem{problem}{Problem}[section]
\newtheorem{proposition}{Proposition}[section]


\newtheorem{theorem}{Theorem}[section]
\theoremstyle{definition}

\theoremstyle{remark}


\DeclareMathOperator*{\argmin}{argmin}

\DeclarePairedDelimiterX{\norm}[1]{\lVert}{\rVert}{#1}


\setlength\tabcolsep{0mm}
\setlength{\abovecaptionskip}{5pt}
\setlength{\belowcaptionskip}{5pt}
\setlength{\dbltextfloatsep}{8pt plus 1pt minus 1pt}
\setlength{\textfloatsep}{6pt plus 1pt minus 1pt}
\setlength{\intextsep}{6pt plus 1pt minus 1pt}
\setlength{\belowdisplayskip}{5pt} \setlength{\belowdisplayshortskip}{5pt}
\setlength{\abovedisplayskip}{5pt} \setlength{\abovedisplayshortskip}{5pt}

\setlength{\marginparwidth}{0.65in}

{\end{list}}

\usepackage{soul} 
\usepackage[a-2b,mathxmp]{pdfx}[2018/12/22]

\def\prob{MBLR\xspace}
\def\dpalgo{DP-MBLR\xspace}
\def\mctsalgo{MCTS-MBLR\xspace}
\def\exact{OPT-MBLR\xspace}

\newif\ifarxiv
\arxivtrue

\title{\LARGE \bf 
On the Utility of  Buffers in Pick-n-Swap Based Lattice Rearrangement}
\author{Kai Gao \hspace{25mm} Jingjin Yu
\thanks{K. Gao and J. Yu are with the Department of 
Computer Science, Rutgers, the State University of New Jersey, Piscataway, NJ, USA. 
Emails: {\tt\small \{kai.gao, jingjin.yu\}@rutgers.edu}.
This work is partly supported by NSF awards IIS-1845888 and IIS-2132972, and an Amazon Research Award. }}
\begin{document}

\maketitle


\begin{abstract}
We investigate the utility of employing multiple buffers in solving a class of rearrangement problems with pick-n-swap manipulation primitives. In this problem, objects stored randomly in a lattice are to be sorted using a robot arm with $k\ge 1$ swap spaces or buffers, capable of holding up to $k$ objects on its end-effector simultaneously. 
On the structural side, we show that the addition of each new buffer brings diminishing returns in saving the end-effector travel distance while holding the total number of pick-n-swap operations at a minimum. 
This is due to an interesting recursive cycle structure in random $m$-permutation, where the largest cycle covers over $60\%$ of objects.
On the algorithmic side, we propose fast algorithms for 1D and 2D lattice rearrangement problems that can effectively use multiple buffers to boost solution optimality.
Numerical experiments demonstrate the efficiency and scalability of our methods, as well as confirm the diminishing return structure as more buffers are employed.

\vspace{2mm}
\noindent Introduction video: 
\url{https://youtu.be/KtBxoARGaVQ}
\end{abstract}

\section{Introduction}\label{sec:intro}

Efficient multi-object rearrangement is an essential skill for robots to master. The quality of a rearrangement solution largely depends on how often the end-effector picks up objects and, to a lesser extent, on the end-effector travel. 
Whereas most existing robot end-effectors can only pick and place a single object at a time, end-effectors capable of holding multiple objects have begun to pop up (see, e.g., Fig.~\ref{fig:multiHeadGripper}, top row). 
For these new end-effector types, new algorithms must be developed to exploit their capabilities fully. Toward this goal, the \emph{pick-n-swap} primitive, which allows an end-effector to hold one object and make object swaps temporarily, is systematically studied in \cite{yurearrangement} for objects stored in regular, lattice-like storage (see, e.g., Fig.~\ref{fig:multiHeadGripper}, bottom row). Exploring unique properties of the pick-n-swap problem on lattices, the study developed efficient algorithms for computing optimal or near-optimal solutions for minimizing the number of pick-n-swaps and end-effector travel.

In this work, we study the pick-n-swap problem further. As it is shown~\cite{yurearrangement} that holding one object and performing object swaps can significantly enhance the efficiency in solving rearrangement tasks (as compared with pick-n-place without swapping capability), a natural follow-up question is whether an additional gain is possible by making the end-effector even more capable. 
\begin{figure}[ht]
    \centering
    \includegraphics[width=\columnwidth]{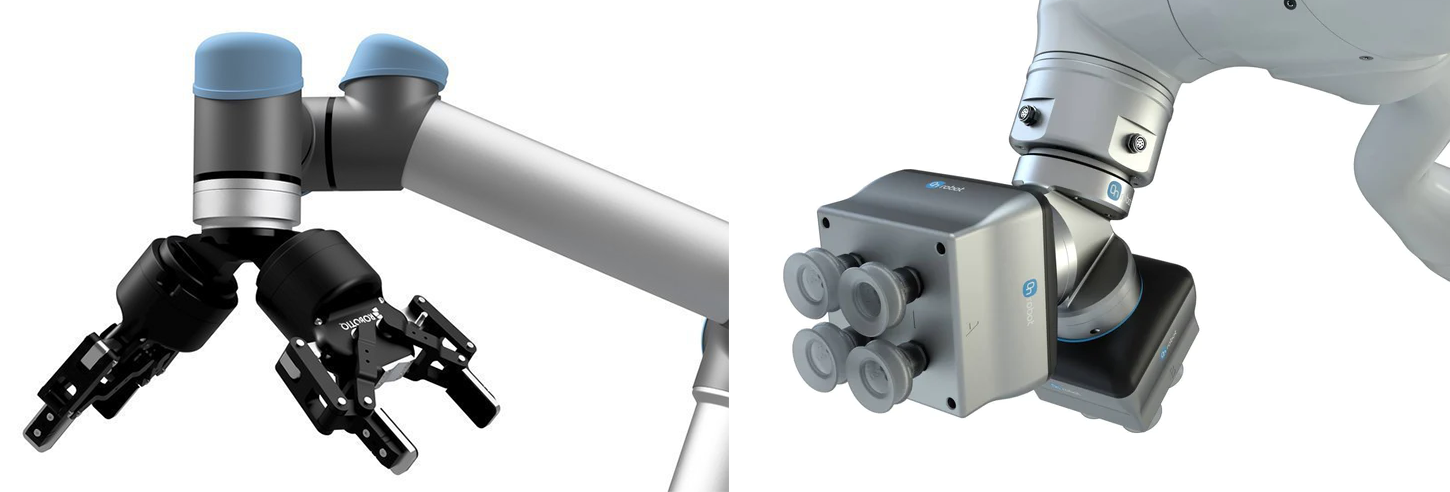}
    \vspace{-1mm} \\
    \includegraphics[width=\columnwidth]{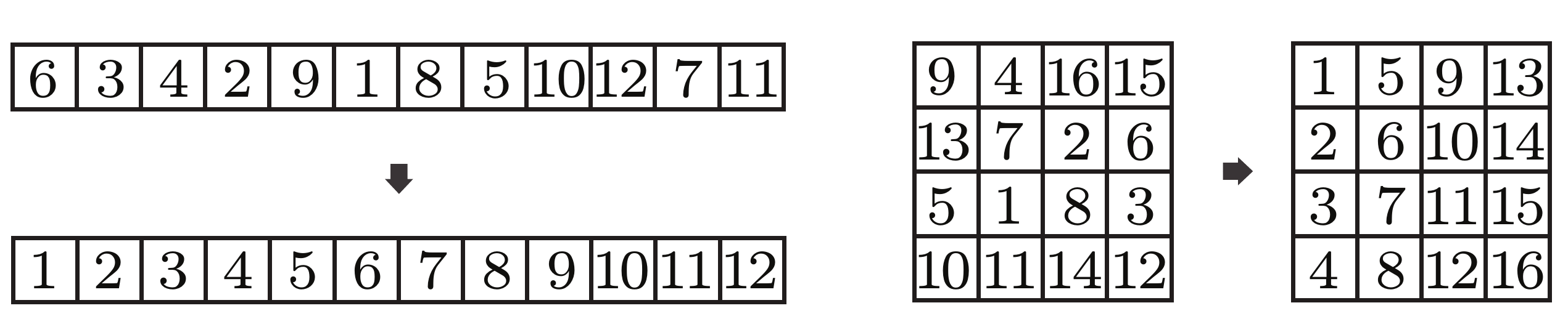}
    \vspace{-2mm}
\caption{[top left] A robot from Universal Robots equipped with dual Robotiq grippers. [top right] A dual OnRobot vacuum gripper setup. [bottom] Illustrations of 1D and 2D lattice rearrangement problems.
}
    \label{fig:multiHeadGripper}
\end{figure}
Specifically, we want to understand the benefit, if any, of allowing the end-effector to hold multiple objects at a time. We call each unit of capacity the end-effector has in temporarily holding an object as a \emph{buffer}, i.e., an end-effector with $k$ buffers can temporarily hold $k$ objects. Intuitively, having more buffers will allow more flexibility in rearrangement and thus reduce task execution time.

On the structural side, as it turns out, for rearranging randomly distributed objects stored in lattice-like storage, when the pick-n-swap time dominates the total cost, each additional buffer after the first provides increasingly diminished returns in improving the overall efficiency. Due to the observation, we can conclude that there is little to gain after having $2$-$3$ buffers, regardless of the total number of objects.  
On the algorithmic side, building on a \emph{cycle-following} algorithm proposed in \cite{yurearrangement} for a single buffer, we have developed effective and fast algorithms using dynamic programming (DP) for handling an arbitrary number of buffers for 1D lattices (i.e., objects form a single row), which is subsequently extended to 2D settings. We have also developed search-based algorithms for handling the general objective, allowing arbitrary cost trade-offs between the pick-n-swaps and end-effector travel. 

More concretely, the contributions of this work are:
\begin{itemize}[leftmargin=3.5mm]
\item We establish a tight lower bound of the expected length of the largest cycle in a random permutation. 
The result is supported by a simple yet elegant proof and has the potential to be of broader interest beyond the immediate scope of this work.

\item For the multi-buffer pick-n-swap-based rearrangement problem on lattices, for sequentially minimizing the number of pick-n-swaps and end-effector travel, applying the aforementioned structural result, we establish that the utility of having $k$ buffers exponentially decays with $k$. Quantitatively, having $3$ buffers is sufficient to reap over $90\%$ of the possible efficiency gain, implying that a more complex end-effector design may be unnecessary. 
\item We propose an effective dynamic programming (DP) method for solving multi-buffer pick-n-swap problems. Our method computes high-quality solutions for 1D and 2D lattices. Extensive numerical evaluations not only confirm our method's effectiveness but also confirm that adding more buffers provides a diminished return in efficiency gain. Our DP methods may be readily extended to higher dimensions.
\item We further develop an algorithm based on Monte Carlo tree search that supports arbitrary trade-offs between pick-n-swap costs and end-effector travel costs, providing a general solution to the multi-buffer pick-n-swap problem. 
\end{itemize}

\noindent\textbf{Organization}. The rest of the paper is organized as follows. 
In Sec.~\ref{sec:related}, we introduce previous works in various related problem domains.
In Sec.~\ref{sec:prob}, we provide a formal definition of the problem we study and review a crucial cycle structure in single buffer lattice rearrangement.
Then, in Sec.~\ref{sec:properties}, we establish some structural
properties on parallelizing rearrangement tasks with multiple buffers. We describe
our proposed algorithmic solutions in Sec.~\ref{sec:method}. The evaluation
follows in Sec.~\ref{sec:experiments}. We conclude with Sec.~\ref{sec:conclusion}.

\section{Related Works}\label{sec:related}
Autonomous robotic object rearrangement apply to both household settings\cite{wada2022reorientbot, gao2021fast, zeng2021transporter, danielczuk2021object} and logistics \cite{szegedy2020rearrangement, wang2021uniform, han2021toward, wang2021efficient, wang2022lazy}.
While rearrangement in shelf-like scenarios \cite{wada2022reorientbot, lee2019efficient, wang2021efficient, wang2022lazy, vieira2022persistent} focus more on robot-object collision avoidance, the ones in the tabletop\cite{danielczuk2021object, huang2019large, han2018complexity, shome2018fast, han2021toward, huang2021visual, gao2021fast, song2020multi} or conveyor scenario\cite{carlisle1994pivoting, han2019toward} tend to use overhand grasps and emphasize manipulation efficiency.
Similar to the tabletop setting, a recent work\cite{yurearrangement} studies rearrangement in lattices while allowing a pick-n-swap operation instead of the more standard pick-n-place operation.
Rearrangement with non-prehensile actions (e.g. pushes)\cite{huang2019large, huang2021dipn, han2021toward, huang2021visual, song2020multi} does not require accurate contact between the robot and objects but the resulting arrangements tend to be less predictable.
In contrast, rearrangement with prehensile actions (e.g. pick-n-place)\cite{wada2022reorientbot, lee2019efficient,huang2019large, krontiris2015dealing,krontiris2016efficiently, wang2021efficient,han2018complexity,labbe2020monte,wang2021uniform,gao2021fast,wang2022lazy} requires accurate grasping poses but more predictable manipulation enables long-horizon planning, leading to increased efficiency. 

Based on different scenarios and applications, the objective of rearrangement varies, with an initial focus on seeking feasible solutions in difficult and general rearrangement problems \cite{krontiris2015dealing, krontiris2016efficiently}. Gradually, the focus shifts to minimizing the number of manipulations to increase the system throughput \cite{wang2021uniform,wang2021efficient,huang2021visual,gao2021fast,wang2022lazy}. Additionally, some works\cite{han2018complexity} take the traveling distance of the gripper into consideration, minimizing the total execution time.
In this paper, we investigate lattice rearrangement with pick-n-swaps. Taking both the number of operations and the traveling distance of the end-effector into account, we seek to reduce the execution time of the rearrangement task.

In many research domains including robotic manipulation and operations research, capacity constraints frequently arise that add significant complexity. In rearrangement problems, a recent work \cite{gaorunning} studies the \emph{running buffer size}, which is the size of the needed free space for temporary object displacement in the rearrangement task. 
In the field of printed circuit board assembly, manipulation with multi-head grippers\cite{grunow2004operations, moghaddam2016parallelism, luo2014milp} enables the robot arm to carry multiple objects simultaneously and saves traveling time of the arm. Most of these works tend to assume that the picking locations or the placing positions are located close to each other. 
In our setting, locations of picking and placing are more entangled, making the problem challenging to solve.
\vspace{-1mm}

\section{Preliminaries}\label{sec:prob}
\vspace{-2mm}
\subsection{Problem Formulation}
Consider a $d$-dimensional lattice with size $m_1 \times m_2 \times \dots \times m_d$, with each cell storing exactly one object. 
The storage yields an arrangement $\mathcal A$ of the labeled objects.
A robot arm is tasked to move objects from a start arrangement $\mathcal A_s$ to a desired goal arrangement $\mathcal A_g$ with pick-n-swap operations.
Each pick-n-swap can be represented as a 3-tuple $(\ell,i,j)$, where the robot moves to the picking (lattice) position $\ell$, swap a held object $i$ with the object $j$ stored in the cell.
It is possible that $i=\varepsilon$ or $j=\varepsilon$, i.e., empty.
In this paper, we work with a $k$-buffer arm that can carry at most $k$ objects when traveling over the lattice.
A rearrangement plan $P=\{p_0, p_1, p_2, \dots p_N, p_{N+1} \}$ is a sequence of pick-n-swaps moving objects from $\mathcal A_s$ to $\mathcal A_g$ with 
$p_0=p_{N+1}=(r,\varepsilon, \varepsilon)$, where $r$ is the rest position of the robot arm. 
Correspondingly, the robot arm starts from a rest position $r$ in the lattice, executes pick-n-swaps one by one in $P$, and moves back to $r$ after finishing the task.
We assume the rest position in the lattice is $(1,\ldots,1)$.

Similar to \cite{yurearrangement}, we evaluate the quality of the rearrangement plan based on the execution time.
Assume that the time cost of one pick-n-swap is $c_p$ and that spent by the end-effector to travel a unit distance (the distance between adjacent lattice cells) is $c_t$,
then the cost function can be represented as:
\begin{equation}\label{eq:obj}
    J_T(P) = c_pN+ c_t \sum_{i=0}^{N} dist(p_i.\ell, p_{i+1}.\ell)
\end{equation}
where $dist(p_i.\ell, p_{i+1}.\ell)$ measures the Euclidean distance between two successive pick-n-swap positions in the lattice.
In practical scenarios, $c_p$ tends to be much larger than $c_t$ as object picking and placing involves prehensile manipulation. Therefore, while our paper does examine the general objective, the discussion focuses primarily on solutions that first minimize the number of pick-n-swaps.

Based on the definitions above, the problem investigated in this paper can be formulated as follows.
\begin{problem}[Multi-Buffer Lattice Rearrangement (\prob)]
Given the current arrangement $\mathcal A_s$, and the goal arrangement $\mathcal A_g$, compute a rearrangement plan $P$ for a $k$-buffer robot arm minimizing $J_T(P)$.
\end{problem}

Specifically, we denote \prob in $d$-dimensional lattice as $d$-\prob. It is assumed that $\mathcal A_s$ is a random arrangement of the objects and $\mathcal A_g$ is an ordered arrangement.

\subsection{Cycle Structures of Rearrangement on Lattices}
We briefly describe the key insights and methods from \cite{yurearrangement} for solving the single-buffer \prob. 
Since $\mathcal A_s$ is a random arrangement and $\mathcal A_g$ is an ordered arrangement, an \prob instance induces a random permutation $\pi$ of $m$ elements. The permutation contains one or more \emph{cycles}, each of which can be represented as a sequence of objects $(o_1, o_2, ..., o_k)$. 
For each object $o_i, 1\leq i \leq k-1$, its goal position is occupied by $o_{i+1}$, and the goal position of $o_k$ is occupied by $o_1$.

The top left figure (ignoring the arrow) in Fig.~\ref{fig:simpleex}  shows a simple instance of 1-\prob, which contains $2$ cycles: $(41)$ and $(53)$. As shown in \cite{yurearrangement}, the number of pick-n-swaps can be minimized by \textbf{\emph{cycle-following}}, which is to perform pick-n-swap operations following each cycle sequentially before moving to the next cycle. To rearrange each cycle, the minimum number of pick-n-swap operations is the size of the cycle plus $1$. For the instance shown in Fig.~\ref{fig:simpleex}, the cycle $(41)$ is first solved, followed by solving the cycle $(53)$. This yields the plan $(1, \varepsilon, \varepsilon)$, $(1, \varepsilon, 4)$, $(4, 4, 1)$, $(1, 1, \varepsilon)$, $(3, \varepsilon, 5)$, $(5, 5, 3)$, $(3, 3, \varepsilon)$, $(1, \varepsilon,\varepsilon)$, with $6$ pick-n-swaps and a total end-effector travel distance of $14$. The plan is illustrated in the first row of Fig.~\ref{fig:simpleex}. 
\begin{figure}[ht]
    \centering
    \includegraphics[width=\columnwidth]{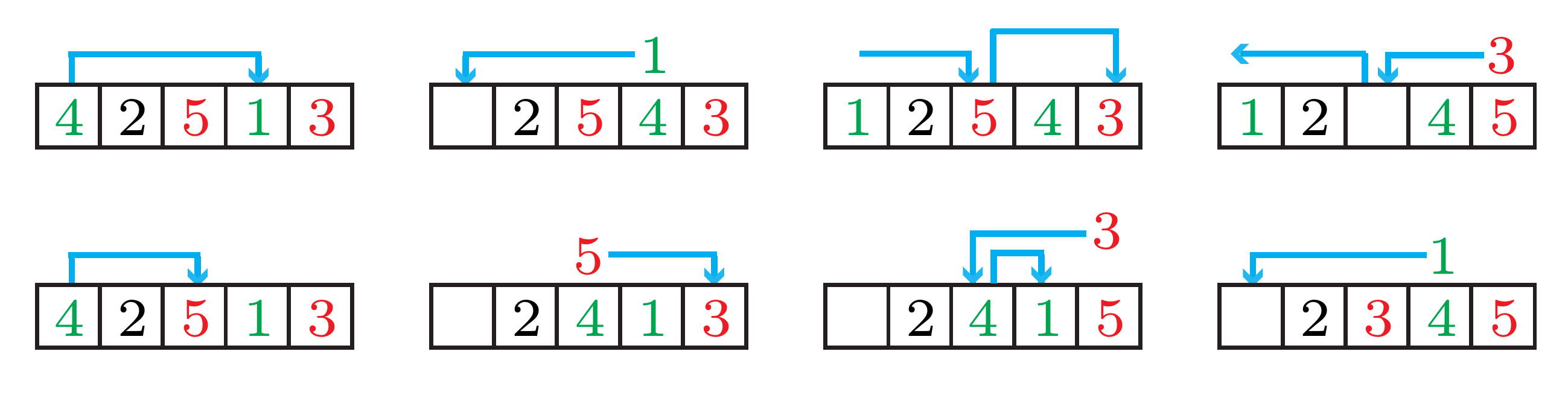}
    \caption{An simple example of 1-\prob with 5 elements, with two cycles $(41)$ and $(53)$ (top left). The two rows show two solution plans.}
    \label{fig:simpleex}
\end{figure}

While cycle-following minimizes the number of pick-n-swaps, end-effector travel distance can be further improved without adding pick-n-swaps via \textbf{\emph{cycle-switching}}. The idea of cycle-switching is that, as the end-effector passes over a new cycle while following another cycle, switching to following the new cycle can reduce the end-effector travel distance. For the example in Fig.~\ref{fig:simpleex}, the cycle $(41)$ is first followed. As the end-effector passes over object $5$ while holding object $4$, it switches to solving the cycle $(53)$, after which it finishes the cycle $(41)$. 
The plan is illustrated in the second row of Fig.~\ref{fig:simpleex} with $6$ pick-n-swaps and a total end-effector travel distance of $10$; we omit the corresponding plan sequence. 

Fig.~\ref{fig:workingExample} shows a slightly more complex example with three cycles and the optimal single-buffer pick-n-swap plan, which has $11$ pick-n-swaps and a travel distance of 16. We will refer back to this instance later.  
\begin{figure}[ht]
    \centering
    \includegraphics[width=\columnwidth]{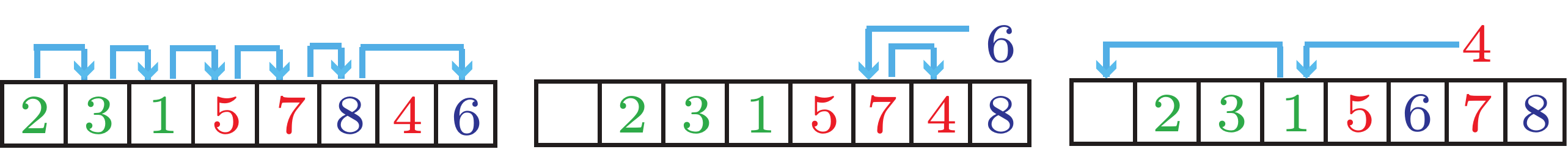}
    \caption{A slightly more complex instance of 1-\prob with three cycles $(231)$, $(574)$, and $(86)$ and the optimal single-buffer pick-n-swap based rearrangement plan.}
    \label{fig:workingExample}
\end{figure}

\section{Diminishing Return of Swap Buffers}\label{sec:properties}
\label{sec:properties}
With multiple buffers, a robot arm can carry more than one object simultaneously. As we will explain in more detail shortly, because each cycle of size $|c|$ requires at least $|c| + 1$ pick-n-swaps to rearrange, having multiple buffers will not help reduce the number of pick-n-swaps; the question is whether we can save on end-effector travel. This seems to be a difficult question because the number of buffers can be unlimited. Surprisingly, we show that for sequentially minimizing Eq.~\eqref{eq:obj} prioritizing the number of pick-n-swaps, more buffers only add exponentially diminishing improvements. We then discuss how multiple buffers may help reduce end-effector travel distance.

\subsection{Multi-Buffer Solutions Minimizing Pick-n-Swaps}

With a single buffer, solving each cycle of size $|c|$ requires $|c| + 1$ pick-n-swaps. It is easy to see that this does not change when there are more than one buffer. Furthermore, given a solution to a $k$-buffer \prob instance minimizing the total number of pick-n-swaps, it must be that the solution can be decomposed into $k$ disjoint partial solutions, one for each buffer, such that each partial solution is a cycle-switching solution for a single-buffer problem. Some of the $k$ partial solutions may be empty.  That is, 

\begin{proposition}\label{prop:cycle2buffer}
A $k$-buffer solution to an \prob instance minimizing the number of pick-n-swaps can be decomposed into $k$ partial solutions such that each partial solution is a cycle-switching solution for a disjoint subset of cycles of the \prob instance. This further implies that each cycle in the instance is handled by a single buffer. 
\end{proposition}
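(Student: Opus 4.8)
The plan is to derive the exact minimum number of pick-n-swaps together with the rigid structure it forces on every minimizer; the decomposition then follows from that rigidity. Write $q$ for the number of nontrivial cycles and $m' = \sum_c |c|$ for the number of misplaced objects, so the target value is $\sum_c(|c|+1) = m'+q$. Call a pick-n-swap $(\ell,i,j)$ a \emph{progress} step if it places $i$ into its own goal cell (strictly increasing the count of correctly placed objects), and a \emph{non-progress} step otherwise. Each misplaced object must occupy its goal cell at some point, so there are at least $m'$ progress steps. For the remaining $q$, I would use that the cells touched by a cycle $c$ --- the goal cells of its objects, which as a set equal their initial cells --- are disjoint across cycles, so every step touches at most one cycle. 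The first step touching $c$ cannot be progress: at that moment every cell of $c$ still holds its original misplaced object, and the unique object whose goal is the touched cell has not yet been picked, so whatever is placed there is wrong. This charges one distinct non-progress step to each cycle and yields $N \ge m'+q$; since cycle-following attains it with a single buffer, the minimum is $m'+q$ for every $k$.

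The payoff of this derivation is that a plan minimizing the number of pick-n-swaps is highly constrained: it has exactly $m'$ progress steps, one per misplaced object, and exactly $q$ non-progress steps, one \emph{opening} per cycle. Two consequences are central. First, no object is ever re-picked after being correctly placed, for that would force a second progress step for it and break the count. Second, the only non-progress steps are the $q$ openings, so an object may be parked in a non-goal cell only by the opening step of the cycle that owns that cell.

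Next I would track the buffer slot carrying each object. Restricting the plan to the steps that use a single fixed slot gives a sequence holding at most one object at a time and obeying the pick-n-swap semantics, hence a legal single-buffer plan; being minimal for the cycles it serves, it is an optimal single-buffer plan, i.e.\ a cycle-switching solution (with cycle-following as the zero-switch case). It remains to show the $k$ resulting plans act on \emph{disjoint} cycle sets. If some cycle were served by two slots, then by the rigidity an object of it must be handed off between slots, i.e.\ parked in a cell by one slot and retrieved by another; by the first consequence the parking step is non-progress, hence it is the opening of the cycle $c''$ owning that cell, and the retrieval is $c''$'s progress step filling it. This is exactly the cycle-switching pattern, so the two cycles are fused, and I would re-route the fused cycles onto a single slot --- feasible because a cycle-switch is realizable with one buffer at the same pick-n-swap cost --- repeating until no cycle is split. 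The outcome is a partition of the cycles among the $k$ buffers, each buffer solving its disjoint subset by a single-buffer cycle-switching solution, which is the claim.

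The main obstacle is precisely this re-routing step: establishing that the only zero-cost way two buffers can cooperate on overlapping cycles is the temporary-parking switch, and that such cooperation can always be folded back onto one buffer without adding pick-n-swaps. Equivalently, one must rule out any genuinely cheaper multi-buffer method of servicing a single cycle, which I would settle with the charging argument above: a second buffer entering a cycle must either duplicate a correct placement or create a second non-progress step for that cycle, both impossible at the minimum, unless it is the switching hand-off, which one buffer can reproduce.
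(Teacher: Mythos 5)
Your proposal is correct, but it takes a substantially different and more self-contained route than the paper. The paper's proof is a short contradiction argument: it notes a minimizer exists (use one buffer, ignore the rest), then claims that if a cycle were handled by more than one buffer, two objects of that cycle would have to be held simultaneously, which would force at least $|c|+2$ pick-n-swaps for that cycle; the case of too many partial solutions is handled by merging. The key quantitative step there is asserted rather than derived, leaning implicitly on the single-buffer analysis of the prior work. You instead prove the exact minimum $m'+q$ from scratch via the progress/non-progress charging argument (one progress step per misplaced object, one non-progress ``opening'' per cycle at its first-touched cell), and then extract the full rigidity of any minimizer: no re-picking of correctly placed objects, and parking only at openings. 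This buys you two things the paper's proof does not make explicit: a proof of the $|c|+2$ claim the paper asserts, and a correct treatment of the subtle point that a minimizer can physically hand an object between buffer slots via the park-at-an-opening pattern, so ``each cycle is handled by a single buffer'' only holds after re-bookkeeping the fused cycles onto one slot --- which is exactly the cycle-switching structure the proposition names. The one place your argument is thinnest is the final folding step (that fused components can always be reassigned to at most $k$ slots without conflict); this follows because each fused component holds at most one object at any time, so at most $k$ components are active simultaneously and they can be scheduled onto $k$ slots, but you should say so explicitly. You correctly identify this as the crux; the paper's own proof is far terser at the corresponding point, so your version is arguably the more complete of the two.
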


\begin{proof}
First of all, a solution with the minimum number of pick-n-swaps can be readily computed: we may simply use a single buffer to solve the problem and do not use any of the $k-1$ additional buffers. 
Now, if the statement does not hold, there are three  possibilities: (1) it decomposes into more than $k$ non-empty partial solutions, each over a disjoint set of cycles, (2) some cycle is handled by multiple buffers simultaneously, and (3) both of (1) and (2). If (2) or (3) is the case, then two objects of the same cycle $c$ are held in the buffer. This means that the cycle requires at least $|c| + 2$ pick-n-swaps to solve, which in turn means that the solution cannot minimize the number of pick-n-swaps. If instead (1) happens, we can readily merge some partial plans so that there are at most $k$ partial solutions, each covering a disjoint set of cycles. 
\end{proof}

\subsection{Object Distribution among Cycles}
Prop.~\ref{prop:cycle2buffer} says that each cycle is handled by a single buffer if we want to minimize the number of pick-n-swaps. If we can show that most objects are contained in only a small number of cycles, then we know that only a small number of buffers are needed. As it turns out, this is indeed the case. First, we look at the average size of the largest cycle in a random permutation.


\begin{theorem}[Golomb-Dickman Constant\cite{golomb1964random}]\label{t:golomb}
    Let $a_n$ denote the average size of the largest cycle among all permutations of size $n$, then
    \begin{equation}
        \lim_{n\rightarrow \infty} \dfrac{a_n}{n}=0.6243299885...
    \end{equation}
\end{theorem}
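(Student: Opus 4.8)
The plan is to show that the scaled longest-cycle length $L_n/n$, where $L_n$ denotes the length of the longest cycle in a uniformly random permutation of $[n]$, converges in distribution to an explicit law and, being bounded in $[0,1]$, also converges in mean; the limiting mean is then identified with $\lambda = 0.6243299885\ldots$. This mirrors the classical Dickman asymptotics for the largest prime factor of a random integer, with the longest cycle playing the role of the largest prime factor.

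First I would record the elementary exact fact that drives everything. For $n/2 < k \le n$ a permutation can contain at most one cycle of length $k$, and a direct count (choose the $k$ cyclic elements, form the cycle, permute the rest) gives exactly $\binom{n}{k}(k-1)!(n-k)! = n!/k$ such permutations, so
\begin{equation}
P(L_n = k) = \tfrac1k, \qquad n/2 < k \le n.
\end{equation}
This alone yields $\sum_{n/2 < k \le n} k\cdot\tfrac1k = n - \lfloor n/2\rfloor \sim \tfrac12 n$, i.e.\ the ``upper half'' of the cycle-length spectrum already contributes $\tfrac12 n$ to $a_n$; the remaining contribution, from configurations whose longest cycle is at most $n/2$, is what pushes the constant from $0.5$ up to $\lambda \approx 0.624$.

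To reach the lower range I would set up a recursion using the standard fact that the cycle through a fixed element is uniform over the lengths $1,\dots,n$. Writing $p(n,m) = P(L_n \le m)$ and conditioning on that cycle's length $j$ gives
\begin{equation}
p(n,m) = \frac1n \sum_{j=1}^{\min(n,m)} p(n-j,m), \qquad p(0,m)=1.
\end{equation}
Setting $m = \lfloor \alpha n\rfloor$ and letting $n\to\infty$, the right-hand side becomes a Riemann sum and the recursion passes to the integral equation defining the Dickman function $\rho$ (namely $u\rho'(u) = -\rho(u-1)$ with $\rho\equiv 1$ on $[0,1]$), so that $p(n,\lfloor\alpha n\rfloor) \to \rho(1/\alpha)$. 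Hence $L_n/n \Rightarrow G$ with CDF $G(\alpha)=\rho(1/\alpha)$ on $(0,1]$, and since $L_n/n$ is bounded, $a_n/n = E[L_n/n] \to \int_0^1\bigl(1-\rho(1/\alpha)\bigr)\,d\alpha$. Evaluating this integral (equivalently $\int_1^\infty (1-\rho(u))\,u^{-2}\,du$, or via the closed form $\int_0^1 e^{\mathrm{Li}(t)}\,dt$) returns the numerical value $\lambda$.

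The main obstacle is the scaling step: rigorously justifying the passage from the discrete recursion to the Dickman integral equation, with error control uniform in $\alpha$ so that convergence of the CDF --- and therefore of the bounded mean --- is genuine rather than merely formal. The exact-count step and the final deterministic integral evaluation are routine by comparison. An alternative and equally valid route to the same limiting mean is the Shepp--Lloyd Poissonization, in which permutations are weighted by $x^{n}$ so that the cycle counts become independent Poisson variables; de-Poissonizing then produces the same integral representation of $\lambda$.
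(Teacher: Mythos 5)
The paper does not actually prove this theorem: it is imported verbatim from Golomb's 1964 work, with the remark that ``the proof of Theorem~3.1 is rather complex.'' What the paper proves instead is the weaker, elementary lower bound of Theorem~3.2 ($a_m \ge 0.607m - O(1)$), whose starting point --- the exact count $\binom{m}{i}(i-1)!(m-i)! = m!/i$ of permutations with largest cycle of length $i > \lfloor m/2\rfloor$, hence $P(L_m = i) = 1/i$ on the upper half --- is precisely the elementary half of your argument. So your observation that the upper half alone contributes $\sim m/2$ to the mean, and that the work lies entirely in the range $i \le m/2$, matches exactly how the paper frames the problem; the paper then settles for crude bounds on that range rather than the exact limit.

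As a proof of the full theorem, your outline is the standard and correct route (essentially Shepp--Lloyd): the recursion $p(n,m) = \frac1n\sum_{j=1}^{\min(n,m)} p(n-j,m)$ is right (it checks out on small cases, e.g.\ $p(3,2)=2/3$), the limiting CDF $G(\alpha)=\rho(1/\alpha)$ is the correct Dickman law, and the identification $\lambda = \int_1^\infty (1-\rho(u))u^{-2}\,du$ is the correct integral representation of the Golomb--Dickman constant. However, you have honestly flagged --- but not closed --- the one step that carries all the analytic weight: proving that $p(n,\lfloor\alpha n\rfloor) \to \rho(1/\alpha)$ with enough uniformity in $\alpha$ to conclude convergence of the bounded mean. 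Without that, what you have is a correct proof strategy rather than a proof; that step is exactly why the paper declines to prove the theorem and instead cites it. If your goal were only to support the paper's downstream use of the constant (the diminishing-returns argument for $k$ buffers), note that the elementary portion of your argument already recovers the paper's Theorem~3.2-style lower bound with far less machinery, and that lower bound suffices for the qualitative conclusion.
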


The proof of Theorem~\ref{t:golomb} is rather complex. In this work, using rudimentary analysis, we establish a lower bound fairly close to the Golomb-Dickman Constant.
\begin{theorem}\label{t:lcsize}
The expected size of the largest cycle in a random permutation of $m$ letters is lower bounded by 0.607$m$ asymptotically.
\end{theorem}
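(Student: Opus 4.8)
The plan is to set up a self-referential recursion for $a_m := \mathbb{E}[L_m]$, the expected length of the largest cycle in a uniform random permutation of $\{1,\dots,m\}$, and then to close it with a single convexity (Jensen) step. The entry point is the classical fact that the cycle containing a fixed element, say $1$, has length $U$ distributed \emph{uniformly} on $\{1,\dots,m\}$; conditioned on $U=u$, the other $m-u$ elements form an independent uniform permutation. Since the largest cycle is either the one through $1$ or the largest cycle among the remaining elements, $L_m=\max(U,L_{m-U})$ with $L_{m-u}$ an independent copy, which yields the exact identity
\begin{equation*}
a_m=\frac{1}{m}\sum_{u=1}^{m}\mathbb{E}\!\left[\max(u,L_{m-u})\right].
\end{equation*}

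The crucial move is to observe that $t\mapsto\max(u,t)$ is convex, so Jensen gives $\mathbb{E}[\max(u,L_{m-u})]\ge\max(u,a_{m-u})$ and hence the scalar inequality $a_m\ge\frac1m\sum_{u=1}^{m}\max(u,a_{m-u})$. This trades an intractable coupling of full cycle-length distributions for a clean recursion involving only the means, at the cost of one inequality — and, conveniently, it is an inequality in exactly the direction we want for a lower bound.

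To extract the constant I would posit $a_m\ge cm-b$ for a fixed slack $b$ and prove it by strong induction. Substituting the hypothesis on the right-hand side and recognizing $\frac1m\sum_{u=1}^{m}\max(u,c(m-u))$ as a Riemann sum for $m\,\Phi(c)$, where $\Phi(c):=\int_0^1\max(x,c(1-x))\,dx$, the inductive step closes precisely when $\Phi(c)>c$. A direct integration gives $\Phi(c)=\tfrac12+\tfrac{p^2}{2(1-p)}$ with $p=\tfrac{c}{1+c}$, and $\Phi(c)=c$ reduces to $c^2+c-1=0$, whose relevant root is the golden-ratio conjugate $c_\star=\tfrac{\sqrt5-1}{2}\approx0.618$. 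Because $\Phi'(c)=p-\tfrac{p^2}{2}<1$, the gap $\Phi(c)-c$ is strictly decreasing, so $\Phi(c)>c$ for every $c<c_\star$; the induction then yields $\liminf_m a_m/m\ge c$ for all such $c$, i.e. $\liminf_m a_m/m\ge 0.618$, which comfortably establishes the claimed $0.607\,m$ bound. (One may equally skip solving the quadratic and just verify the single inequality $\Phi(0.607)>0.607$ directly, landing exactly on the stated constant.)

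The step I expect to be most delicate is making the induction fully rigorous: one must show that the discrete sum $\frac1m\sum_{u}\max(u,c(m-u))$ differs from its integral $m\,\Phi(c)$ by only an $O(1)$ (not $O(m)$) error, and then choose $b$ and a threshold $m_0$ so that the strict surplus $\Phi(c)-c>0$ absorbs that error for all $m\ge m_0$ while the finitely many base cases $m<m_0$ hold trivially. Establishing monotonicity of $\Phi(c)-c$ and the $O(1)$ Riemann-sum bound are the two technical points needing care; the remaining algebra is elementary.
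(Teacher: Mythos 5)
Your proof is correct, and it takes a genuinely different route from the paper's. The paper works directly with the distribution $P_m^i$ of the largest-cycle size: it uses the exact count $P_m^i = 1/i$ for $i > \lfloor m/2\rfloor$, a uniqueness-based lower bound on $P_m^i$ for $\lfloor m/3\rfloor < i \le \lfloor m/2\rfloor$, discards the range $i \le \lfloor m/3\rfloor$ entirely, and tallies $\sum_i iP_m^i$ with some careful bookkeeping of logarithmic error terms to land on $0.6071m - O(1)$. You instead exploit the classical fact that the cycle through a fixed element has uniformly distributed length, derive the exact recursion $a_m=\frac1m\sum_{u}\mathbb{E}[\max(u,L_{m-u})]$, close it with Jensen (valid, since $\max(u,\cdot)$ is convex and the conditional residual permutation is uniform), and run an induction against the linear ansatz $a_m\ge cm-b$. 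Your algebra checks out: $\Phi(c)=\tfrac12+\tfrac{p^2}{2(1-p)}$ with $p=c/(1+c)$, the fixed point is $c_\star=(\sqrt5-1)/2\approx 0.618$, and $\Phi(c)-c$ is strictly decreasing since $\Phi'(c)=p-p^2/2<1$. The technical points you flag are indeed routine: $g(x)=\max(x,c(1-x))$ is $1$-Lipschitz on $[0,1]$, so the Riemann-sum error in $\sum_u g(u/m)$ is at most an absolute constant, and taking $b=cm_0$ makes the finitely many base cases trivial because $a_m\ge 0$. The net effect is that your argument is cleaner and yields a strictly stronger constant, $0.618$ versus the paper's $0.607$ (both, as they must be, below the Golomb--Dickman constant $0.6243$), so it comfortably implies the stated theorem; what the paper's approach buys in exchange is explicit, non-asymptotic information about the distribution $P_m^i$ itself rather than only its mean.
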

\ifarxiv
\begin{proof}
Let $P_m^i (1\leq i \leq m)$ denote the probability that the largest cycle of a random permutation on $m$ letters has size $i$. When $i > \lfloor m/2 \rfloor$, as shown in \cite{gal2003cell}, we have $\binom{m}{i} (i-1)!(m-i)!$ permutations whose largest cycle has size $i$. 

Therefore, for $i > \lfloor m/2 \rfloor$, 
\begin{align}
P_m^i = \frac{1}{m!}\binom{m}{i}(i-1)!(m-i)! = \frac{1}{i}
\end{align}

When $i\leq \lfloor m/2 \rfloor$, $P_m^i$ is lower bounded by the probability that a permutation has a unique largest cycle with size $i$.

\begin{equation}
\begin{split}
    P_m^i &\geq \dfrac{1}{m!}(
    \begin{pmatrix} m \\ i \end{pmatrix} (i-1)!(\sum_{j=1}^{i-1} P_{m-i}^j)(m-i)!
    )\\
    &= \dfrac{1}{i}\sum_{j=1}^{i-1} P_{m-i}^j = \dfrac{1}{i}(1-\sum_{j=i}^{m-i} P_{m-i}^j)\\
\end{split}
\end{equation}

Specifically, if $\lfloor m/3 \rfloor < i \le \lfloor m/2 \rfloor$, $i>(m-i)/2$. Therefore, for $\lfloor m/3 \rfloor < i \le \lfloor m/2 \rfloor$, 
\begin{equation}
    \begin{split}
    P_m^i &\geq \dfrac{1}{i}(1-\sum_{j=i}^{m-i} 1/j) \geq \dfrac{1}{i}(1-ln\dfrac{m-i}{i-1}). 
    \end{split}
\end{equation}

The expected size of the largest cycle is then 
\begin{equation}
\begin{split}
    \sum_{i=1}^m iP_m^i
    =&\sum_{i=\lfloor m/2 \rfloor+1}^{m} (iP_m^i) + \sum_{i=\lfloor m/3 \rfloor+1}^{\lfloor m/2 \rfloor} (iP_m^i) + \sum_{i=1}^{\lfloor m/3 \rfloor} (iP_m^i)\\
    \geq & \lceil m/2 \rceil + \sum_{i=\lfloor m/3 \rfloor+1}^{\lfloor m/2 \rfloor} (1-ln\dfrac{m-i}{i-1})\\
    \geq & \lceil 2m/3 \rceil -\sum_{\substack{\lfloor m/3 \rfloor+1 \leq a \leq b \leq \lfloor m/2 \rfloor\\ a+b=\lfloor m/3 \rfloor+1+\lfloor m/2 \rfloor}}ln\dfrac{(m-a)(m-b)}{(a-1)(b-1)}\\ 
\end{split}
\end{equation}

When $\lfloor m/3 \rfloor+1 \leq a\leq b \leq \lfloor m/2 \rfloor$ and $a+b=\lfloor m/3 \rfloor+1+\lfloor m/2 \rfloor$, we have $5m/6-1\leq a + b \leq 5m/6+1$ and $m^2/6-m/3 \leq ab \leq (5m/6+1)^2/4$. Therefore,

\begin{displaymath}
    \begin{split}
        &\lim_{m\rightarrow \infty}\sum_{i=1}^m iP_m^i\\
        \geq & \lfloor 2m/3 \rfloor -\sum_{\substack{\lfloor m/3 \rfloor+1 \leq a \leq b \leq \lfloor m/2 \rfloor\\ a+b=\lfloor m/3 \rfloor+1+\lfloor m/2 \rfloor}}ln\dfrac{(m-a)(m-b)}{(a-1)(b-1)}\\ 
        = & \lfloor 2m/3 \rfloor -\sum_{\substack{\lfloor m/3 \rfloor+1 \leq a \leq b \leq \lfloor m/2 \rfloor\\ a+b=\lfloor m/3 \rfloor+1+\lfloor m/2 \rfloor}}ln\dfrac{m^2-(a+b)m+ab}{ab-(a+b)+1}\\
        \geq & \lfloor 2m/3 \rfloor -(\dfrac{m/6}{2}+1)ln\dfrac{49m^2/144+17m/12+1/4}{m^2/6-7m/6}\\
        \geq & \lfloor 2m/3 \rfloor -(\dfrac{m/6}{2}+1) ln(49/24-\dfrac{125m-36}{24m(m-7)})\\
        \geq & (2m/3 -1) -(\dfrac{m/6}{2}+1) ln(49/24))\\
        \geq & 0.6071m-1.72 
    \end{split}
\end{displaymath}
\end{proof}
\else 
\begin{proof}[Proof sketch]
Let $P_m^i$, $1\leq i \leq m$, denote the probability that the largest cycle of a random permutation on $m$ letters has a size $i$. The key idea behind the technical proof is to compute $P_m^i$ for $\lfloor m/3 \rfloor < i \le m$ and tally all the entries, which gives the result. 
We refer the readers to \cite{gao2022utility} for the complete and somewhat involved proof. 
\end{proof}
\fi

We note that Thm.~\ref{t:golomb} applies recursively since if we remove the largest cycle from a random permutation, the rest of the permutation is still random. This suggests that, for reasonably large $m$, the largest $k=3$ cycles occupies over $(1 - (1-0.624)^3) > 94.6\%$ of all elements.

Alternatively, this can also be observed by estimating the number of cycles in a random permutation of $m$ letters, which equals $H_m -1$ \cite{yurearrangement}, in which $H_m \approx \ln m$ is the $m$-th harmonic number. For two different numbers $m_1$ and $m_2$, $m_1 \ge m_2$, the number of cycles differs by $H_{m_1} - H_{m_2} \approx \ln \frac{m_1}{m_2}$. If we set $H_{m_1} - H_{m_2} = 3$, then $m_2 \approx 0.05 m_1$. In other words, three cycles will occupy about $95\%$ of elements, agreeing with the estimate using Thm.~\ref{t:golomb}. 

\subsection{Task Parallelism with Multiple Buffers}
The analysis so far shows the solution structure of \prob when the number of pick-n-swaps is minimized. In this subsection, we will discuss task parallelization based on \emph{cycle groups}, each of which is a group of cycles with range overlaps in 1-\prob\cite{yurearrangement}. In high-dimensional instances, all cycles belong to a single cycle group. When paralleling the rearrangement of objects $i$ and $j$, there are three different cases: (1) $i$ and $j$ are in the same cycle; (2) $i$ and $j$ are in different cycle groups; (3) $i$ and $j$ are in the same cycle group but in different cycles.

When rearranging objects $i$ and $j$, which are in the same permutation cycle, according to Prop.~\ref{prop:cycle2buffer}, if the rearrangement of $i$ and $j$ is parallelized, then the number of pick-n-swaps in the plan exceeds the minimum.


When objects $i$ and $j$ are in different cycle groups, task parallelization does not reduce the total distance since the range of cycles in different cycle groups does not overlap.
Additionally, for 1-\prob, we prove the independence of cycle groups in the rearrangement plan computation.

\begin{proposition}\label{prop:GroupParallel}
In 1-\prob, rearranging objects from different cycle groups can be considered separately.
\end{proposition}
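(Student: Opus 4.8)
The plan is to reduce the claim to a statement purely about end-effector travel and then exploit the one-dimensional geometry. Since we prioritize minimizing the number of pick-n-swaps, Prop.~\ref{prop:cycle2buffer} guarantees that $N$ is fixed and that every cycle is handled by a single buffer. Consequently, minimizing $J_T$ in Eq.~\eqref{eq:obj} amounts to minimizing the total travel $\sum_{i} dist(p_i.\ell,p_{i+1}.\ell)$, and it suffices to show that this travel term decomposes additively over cycle groups so that each group's sub-plan can be optimized in isolation.

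First I would fix the geometry. In 1-\prob every cycle occupies the contiguous interval between its leftmost and rightmost lattice positions, and a cycle group is by definition a maximal family of cycles whose intervals overlap; hence distinct groups $G_1,\dots,G_t$ occupy pairwise-disjoint intervals $[a_1,b_1],\dots,[a_t,b_t]$, which I order so that $b_s<a_{s+1}$, with the rest position satisfying $1\le a_1$. The only mechanism that lowers travel while keeping $N$ minimal is \emph{cycle-switching}, which requires the end-effector to pass over a second cycle while carrying an object of the first, and this is possible only when the two cycles' intervals overlap. Since cycles in different groups have disjoint intervals, no cross-group cycle-switching is ever available, which is the intuition driving separability.

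To turn this into a proof I would establish a matching lower and upper bound. For the lower bound I use a projection/clamping argument: given any valid plan with position sequence $x_0=1,x_1,\dots,x_{N+1}=1$, define for each group the clamped sequence $y^{(s)}_i=\max(a_s,\min(b_s,x_i))$. Because clamping to an interval is $1$-Lipschitz and the intervals are disjoint, for every step $\sum_s |y^{(s)}_{i+1}-y^{(s)}_i|=\sum_s \mathrm{length}([x_i,x_{i+1}]\cap[a_s,b_s])\le |x_{i+1}-x_i|$, so summing over $i$ gives $\sum_s \mathrm{travel}(y^{(s)})\le \mathrm{travel}(P)$. Retaining only the pick-n-swaps that act inside $[a_s,b_s]$, the sequence $y^{(s)}$ is itself a valid single-group plan for $G_s$ that starts and ends at $a_s$ (its objects are swapped only among themselves, and its operations already lie in $[a_s,b_s]$, so clamping leaves them in place). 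Hence $\mathrm{travel}(y^{(s)})\ge T_s$, the optimum travel for $G_s$ treated in isolation, giving $\mathrm{travel}(P)\ge\sum_s T_s$; a parallel count shows each empty gap $(b_s,a_{s+1})$ and the stretch from the rest position to $a_1$ must be crossed at least twice, adding a fixed trunk term that is independent of how any group is solved internally.

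For the matching upper bound I would exhibit the sequential left-to-right plan: start at the rest position, advance to $G_1$ and run its optimal single-group sub-plan, then advance to $G_2$, and so on, returning to rest after $G_t$. This plan realizes $\sum_s T_s$ inside the groups, crosses each gap exactly twice, and therefore meets both parts of the lower bound, so each group may be optimized independently of the others, which is precisely the separability asserted. The main obstacle I anticipate is the bookkeeping in the lower bound: verifying rigorously that deleting the foreign (other-group) operations from a globally valid plan leaves a \emph{feasible} single-group plan, i.e.\ that the buffer contents remain consistent, together with pinning down the boundary convention (enter and exit each group at $a_s$) so that the trunk travel in the construction exactly matches the gap-crossing terms that the projection bound accounts for.
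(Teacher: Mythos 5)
Your lower bound is sound and in fact cleaner than the paper's: the clamping/projection argument makes rigorous the inequality that the paper simply asserts as Eq.~\eqref{eq:distanceBound}, and the feasibility of the restricted sub-plans is indeed guaranteed by the minimum pick-n-swap premise (via Prop.~\ref{prop:cycle2buffer}, every operation touching a group-$s$ object occurs at a cell of $[a_s,b_s]$). The genuine gap is in your matching upper bound. The naive sequential plan --- finish $G_s$, return to $a_s$, then advance to $G_{s+1}$ --- does \emph{not} meet your lower bound: after exiting $G_s$ at $a_s$, traveling onward to $a_{s+1}$ re-traverses the interval $[a_s,b_s]$, and so does the final return trip, so the travel accrued inside $[a_s,b_s]$ is $T_s+2(b_s-a_s)$ rather than $T_s$ for every non-rightmost group. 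Your claim that this plan ``realizes $\sum_s T_s$ inside the groups'' is therefore false, and with it the tightness argument collapses; the sequential plan is strictly suboptimal by $2\sum_{s<t}(b_s-a_s)$.

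The fix is exactly the construction the paper supplies in Algo.~\ref{alg:group}: run $G_s$'s optimal sub-plan until its \emph{first} pick-n-swap at $\max(G_s)=b_s$ (such an operation must exist, since some object of $G_s$ starts or ends at $b_s$ and the sub-plan must visit it anyway), suspend there, detour rightward to handle $G_{s+1},\dots,G_t$ recursively, then return to $b_s$ and resume $G_s$'s sub-plan. The detour's travel lies entirely in the gaps and in the intervals of the groups to the right, so the travel charged to $[a_s,b_s]$ stays at $T_s$ and the lower bound is attained. Without this interleaving --- or some equivalent observation that the rightward and leftward transits through $[a_s,b_s]$ can be absorbed into $G_s$'s own required sweep to $b_s$ --- the proposition's ``can be considered separately'' is not established, since the obvious way of combining independently optimized group plans loses optimality.
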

\ifarxiv
\begin{proof}
Given a cycle group $G$, a rearrangement plan for $G$ is a plan $P_G$ that starts at the rest position, only rearranges objects in $G$, and finally returns to the rest position.

Suppose that there are $n$ cycle groups $G_1$, $G_2$, ..., $G_n$ ordered by their ranges from left to right. $P_1$, $P_2$, ..., $P_n$ are rearrangement plans(not necessarily optimal) for the cycle groups respectively.
Let the traveling distance in a plan $P$ be $dist(P)$. Since the rest position is at the first cell, the total distance of the rearrangement plan $P$ is lower bounded: 

\begin{equation}\label{eq:distanceBound}
    dist(P) \geq \sum_{i=1}^n dist(P_i) - \sum_{i=2}^n 2|max(G_{i-1})|
\end{equation}

In Algo.~\ref{alg:group}, we compute a rearrangement plan for the instance based on plans for each cycle group. 
Generally speaking, the robot arm starts rearrangement from the leftmost cycle group $G_1$, it moves to the next cycle group on the right when it pick-n-swaps at the rightmost cell of the current cycle group $max(G_i)$. 
When the arm finishes the rearrangement in the current cycle group, it resumes the rearrangement in the previous cycle group on the left.
In Lines 1-2, the algorithm records the indices of actions at $max(G_i)$ in each $P_i$. The indices indicate the moments when the robot arm moves to the next cycle group.
\begin{algorithm}[ht]
\begin{small}
    \SetKwInOut{Input}{Input}
    \SetKwInOut{Output}{Output}
    \SetKwComment{Comment}{\% }{}
    \caption{Cycle Group Switching}
		\label{alg:group}
    \SetAlgoLined
		\vspace{0.5mm}
		\Input{$P_1, P_2, ..., P_n$}
        \Output{$P$: The rearrangement plan for the instance}
		\vspace{0.5mm}
		\For{$1 \leq i \leq n-1$}
		{
		$I_i \leftarrow $ The index of the first pick-n-swap at the cell of $max(G_i)$ in $P_i$
		}
		$P\leftarrow (r,\epsilon,\epsilon) + P_1[2,...,I_1]+\dots+ P_{n-1}[2,\dots,I_{n-1}]+P_n[2,...,|P_n|-1]+P_{n-1}[I_{n-1}+1, ..., |P_2|-1] +\dots+P_{1}[(I_{1}+1), ..., |P_1|-1] + (r,\epsilon,\epsilon)$
\end{small}
\end{algorithm}
For the plan $P$ derived by Algo\ref{alg:group}, $dist(P)$ meets the lower bound in Eq.~\eqref{eq:distanceBound}. Therefore, with Algo.~\ref{alg:group}, minimizing $dist(P)$ is equivalent to minimizing each $dist(P_i)$ individually.
\end{proof}
\else
See \cite{gao2022utility} for the detailed technical proof. 
\fi

Finally, for objects in the same cycle group but in different cycles, using the running example from Fig.~\ref{fig:workingExample}, we show parallel rearrangement using multiple buffers reduces the traveling cost without increasing the number of pick-n-swaps.
In the example, the cycles $(574)$ and $(86)$ can be handled in parallel.
If the robot arm has two buffers, it can place object $7$ on its way to cell $8$, and place object $6$ on its way back to cell $4$. In this way, the traveling distance is reduced by $2$ (total: $14 \to 12$). The new plan using two buffers is illustrated in Fig.~\ref{fig:workingExample-2}.

\begin{figure}[ht]
    \centering
    \includegraphics[width=0.725\columnwidth]{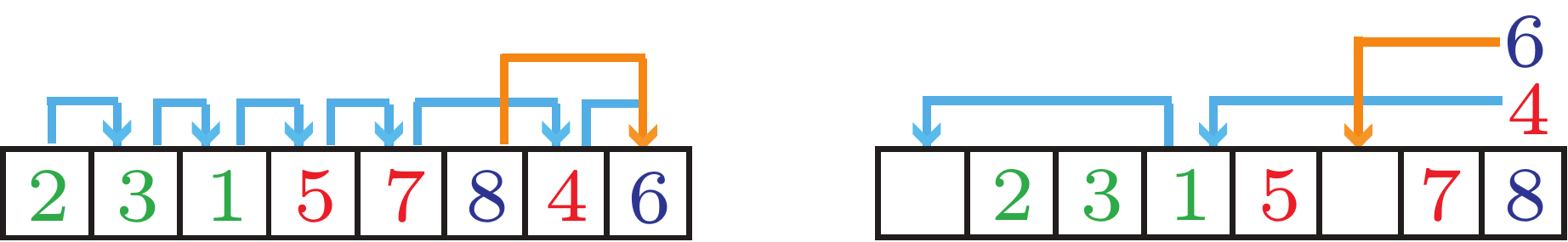}
    \caption{Using a second buffer (indicated by the orange arrows), parallelism can be realized by arranging $(574)$ and $(86)$ in the same cycle group.}
    \label{fig:workingExample-2}
\end{figure}

\section{Methods}\label{sec:method}
In this section, we propose fast algorithms for \prob.
Based on properties established in Sec.~\ref{sec:properties}, we introduce dynamic programming (DP) based algorithms for computing \prob solutions with the minimum number of pick-n-swaps that seek to minimize end-effector travel, for one- and two-dimensional lattices. Our methods also apply to higher dimensions. Additionally, for \prob with the general objective function \eqref{eq:obj}, we propose an algorithm based on Monte Carlo Tree Search (MCTS) for minimizing it.

\subsection{DP based Algorithm for 1-\prob Minimizing the number of Pick-n-Swaps (\dpalgo)}

Algo.~\ref{alg:oneDimension} shows the pipeline of the DP algorithm for 1-\prob. Since rearrangement plans for different cycle groups can be computed independently (Prop.~\ref{prop:GroupParallel}), without loss of optimality, we first decompose the instance into cycle groups to simplify the computation (Line 1). In Line 3, rearrangement tasks of objects in the same cycle will be assigned to the same buffer in order to minimize the number of pick-n-swaps (Prop.~\ref{prop:cycle2buffer}). Based on the assignment, we compute the optimal rearrangement plan minimizing the number of pick-n-swaps and traveling distance for each buffer using the exact algorithm for single-buffer \cite{yurearrangement} (Line 5). After that, we merge the task sequences to obtain a high-quality rearrangement plan for the cycle group. Finally, we concatenate plans for cycle groups with 
\ifarxiv
Algo.~\ref{alg:group} 
\else
the cycle-switching algorithm \cite{yurearrangement}
\fi
to minimize the additional cost when traveling among cycle groups.

\begin{algorithm}[ht]
    \SetKwInOut{Input}{Input}
    \SetKwInOut{Output}{Output}
    \SetKwComment{Comment}{\% }{}
    \caption{\dpalgo for 1-\prob}
		\label{alg:oneDimension}
    \SetAlgoLined
		\vspace{0.5mm}
		\Input{$\mathcal A_s$: Start arrangement in a 1-dimensional lattice arrangement.\\
		        $k$: the number of buffers
		}
        \Output{$P$: The rearrangement plan for the instance}
		\vspace{0.5mm}
		$G_1, G_2, ..., G_n \leftarrow $ CycleGroupDecomposition($\mathcal A_s$)\\
		\For{$ 1\leq i \leq n$}
		{
		$C_1, C_2, ..., C_k \leftarrow$ CycleAssignment($G$)\\
		\For{$1 \leq j \leq k$}{
		$s_j\leftarrow$ SingleBufferRearrangement($C_j$)
		}
		$P_i\leftarrow$ TaskSequencesMerge($\{s_1, ..., s_k\}$)
		}
		$P\leftarrow$ CycleGroupSwitching($\{P_1, ..., P_n\}$)
\end{algorithm}

\subsubsection{Task Assignment to Buffers}\label{sec:cycleAssignment}
In this step, we assign permutation cycles to $k$ buffers maximizing the minimum number of pick-n-swaps assigned to each buffer. This problem is equivalent to Bottleneck Multiple Subset Sum Problem (B-MSSP)\cite{caprara2000multiple}.
For example, consider an \prob instance with two buffers for four permutation cycles $c_1-c_4$, which contain $10, 5, 5, 2$ objects respectively. In this case, the needed pick-n-swaps for each cycle is $11,6,6,3$. The optimal task assignment to buffers is $(c_1,c_4), (c_2, c_3)$, with the minimum number of pick-n-swaps to be $12$. 
Given $k$ buffers and $c$ cycles, when $c \leq k$, we assign one cycle to each of the first $c$ buffers. Otherwise, we solve this problem with an integer linear programming (ILP) model\cite{caprara2000multiple}.

\subsubsection{Merging Task Sequences of Buffers}\label{sec:sequenceMerging}
For each cycle group, we propose a DP-based method for minimizing the total travel cost given the task sequences. Let $T$ be a $k$-tuple $(t_1, t_2, ..., t_k)$ representing a subproblem where only the first $t_i$ tasks are executed for buffer $i$. Let $f$ be the index of the buffer that the last executed task belongs to. For each combination of $T$ and $f$ $(T[f]\geq 1)$, $DP[T,f].cost$ counts the traveling distance from the moment when the arm leaves the rest position to the moment when the arm finishes the $T[f]^{th}$ task of buffer $f$. 
It is computed with the following equation.
\begin{equation*}
    \begin{split}
    & DP[T,f].cost = \\
    & \min_{\substack{
    T'=(t_1, ..., t_f-1, ..., t_k)\\
    T'[f']\geq 1
    }}(DP[T',f'].cost
    + dist((T',f'),(T,f)))
    \end{split}
\end{equation*}

where $(T',f')$($(T,f)$) represents the position of the $T'[f']^{th}$ task of buffer $f'$ ( the $T[f]^{th}$ task of buffer $f$).
Let $T_c$ be the $k$-tuple representing the original problem.
The minimum travel cost is derived based on the equation below:
\begin{equation*}
\min_{T_c[f]\geq 1} DP[T_c, f].cost+dist((T_c,f),r)
\end{equation*}
where $r$ is the rest position.
In a $k-$buffer scenario, the computation time of the dynamic programming approach is $O(m^k)$, which is polynomial in the number of objects $m$. 

For the running example from Fig.~\ref{fig:workingExample}, using $2$ buffers, \dpalgo first decomposes objects into cycles groups $G_1=\{(231)\}$ and $G_2=\{(574), (86)\}$.
For $G_1$, the robot arm moves objects to goal positions in the ordering of $2$, $3$, and $1$ with buffer $1$.
For $G_2$, \dpalgo assigns one cycle to each buffer: $C_1=\{(574)\}$, $C_2=\{(86)\}$. 
Note that according to Prop.~\ref{prop:GroupParallel}, even though buffer $1$ has claimed the tasks in $G_1$, 
it does not affect the total cost no matter which cycle in $G_2$ is assigned to buffer 1.
After \dpalgo computes a single buffer plan in $G_2$, 
it merges the rearrangement plans inside $G_2$ and between cycle groups as shown in Fig.~\ref{fig:sequence_merging}.

\begin{figure}[ht]
    \centering
    \vspace{1mm}
    \includegraphics[width=0.49\textwidth]{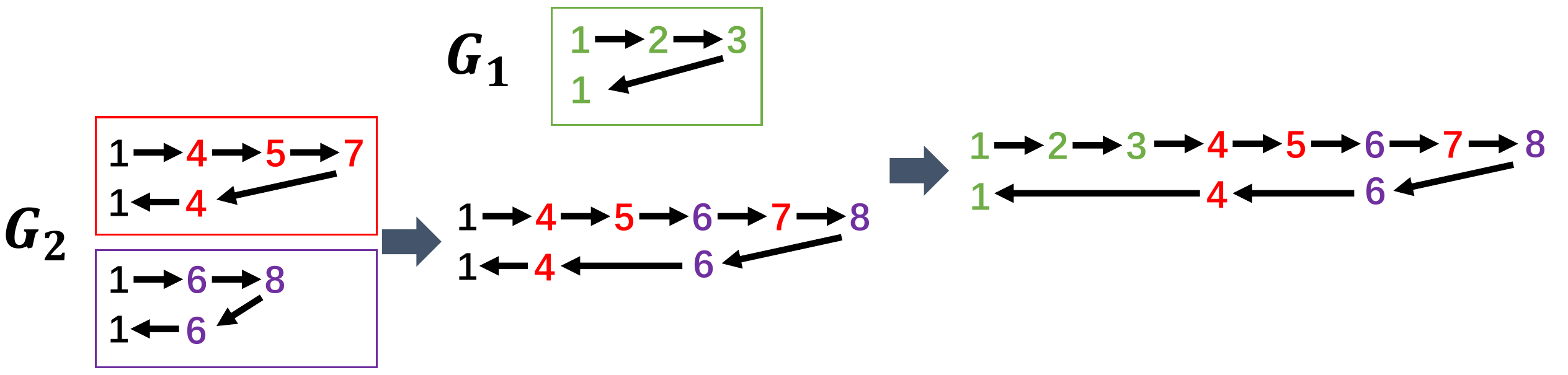}
    \caption{\dpalgo on the running example (Fig.~\ref{fig:workingExample}) with two buffers. This yields the plan shown in Fig.~\ref{fig:workingExample-2}, reducing traveling distance from $16$ to $14$.}
    \label{fig:sequence_merging}
\end{figure}

\subsection{Extension of \dpalgo to High Dimensions}
\dpalgo can be easily extended to $d$-\prob $(d>1)$ with a three-step pipeline. First, assign cycles to buffers with the ILP solver in Sec.~\ref{sec:cycleAssignment}. Second, for each buffer, we compute a plan with an asymptotically distance-optimal algorithm in the single buffer scenario\cite{yurearrangement}. Finally, we merge task sequences of buffers with dynamic programming as Sec.~\ref{sec:sequenceMerging}.

\subsection{Monte Carlo Tree Search for General Objective}
To decide on the next pick-n-swap in optimizing the general form of $J_T(P)$ in Eq.~\eqref{eq:obj}, our MCTS planner maintains a search tree where each node represents a rearrangement state containing three parts of information: object arrangement, objects in buffers, and the robot position. Each edge represents a pick-n-swap. For each node $s$, let $c(s)$ be the cost from the root node to $s$ with the same $c_p$ and $c_t$ in $J_T(P)$. 

For each state, we reduce the action set with two rules without loss of optimality:
\begin{itemize}[leftmargin=4mm]
    \item[a.] Only execute pick-n-swaps at cells $c$ where the target object $\mathcal A_g[c]$ is not placed inside.
    \item[b.] When executing a pick-n-swap at a cell $i$, if $i$ is in buffers, then place $i$.
\end{itemize}

Additionally, for 1-\prob, we notice that it is inefficient to bypass cell $i$ without placing object $i$ if it is in buffers. 
Based on the observation, the next pick-n-swap position is limited to the range between the closest goal positions on the left and right (if any) of objects in the buffers. Take the instance in Fig.~\ref{fig:workingExample} as an example, if the arm is located at $7$ and the objects in buffers are $1$ and $4$, then the range of the next pick-n-swap location is $[4,8]$.

In the selection stage of a node $s$, the MCTS planner chooses an action $a$ based on the following formula in the same spirit as the upper confidence bound (UCB):
\begin{equation}
    \argmin_a (\dfrac{w(f(s,a))}{n(f(s,a))}-C\sqrt{\dfrac{\log(n(s))}{n(f(s,a))}})
\end{equation}
where $f(s,a)$ is the child node of $s$ after action $a$, $w(s)$ and $n(s)$ are the total cost and times of visits at $s$.

\section{Evaluation}\label{sec:experiments}
In this section, we show experiments on the lattice rearrangement problem in two different scenarios, \prob with minimum pick-n-swaps and \prob with general cost functions, with a focus on the former.
The experiments are executed on an Intel$^\circledR$ Xeon$^\circledR$ CPU at 3.00GHz. 
Each data point is the average of 30 test cases except for unfinished trials, if any, given a time limit of 600 seconds for each test case.
Given a random $m$-permutation, the workspace dimension of 1-\prob and 2-\prob in the experiments are $1\times m$ and $\lceil\sqrt{m}\rceil\times\lceil\sqrt{m}\rceil$ respectively (see Fig.~\ref{fig:multiHeadGripper}).


\subsection{\prob with Minimum Number of Pick-n-Swaps}
For 1-\prob with the minimum number of pick-n-swaps, we first compare \dpalgo with a brute force exact algorithm \exact where we enumerate all possible cycle assignment and cycle-switching options to get the minimum distance cost.
Fig.~\ref{fig:LOROptimal}[left] shows the computation time.
When $m=20, 25$, \exact fails in some test cases. In these cases, the computation time is recorded as 600 seconds. 
Fig.~\ref{fig:LOROptimal}[right] shows the distance cost of computed solutions as proportions of the shortest distance in the single buffer setting.
We observe that $2$-$3$ buffers provide significant savings in end-effector travel. 
The results suggest that
compared with the exact algorithm for multiple buffers,
\dpalgo is much more scalable with little loss of optimality.

\begin{figure}[ht]
    \centering
    \includegraphics[width=0.49\textwidth]{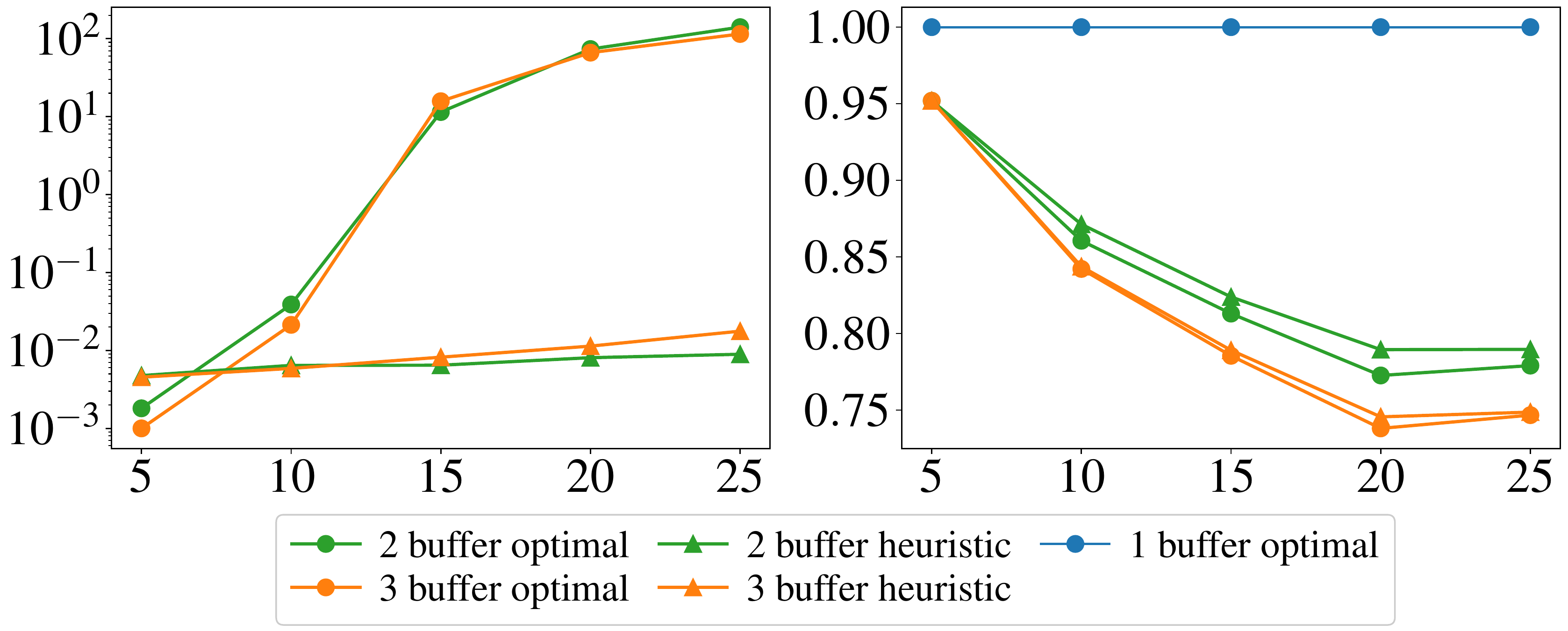}
    \caption{Comparison between \dpalgo and OPT-MBLR. [left] Computation time comparison. [right] Distance cost of computed solutions as proportions of the shortest distance in the single buffer setting.}
    \label{fig:LOROptimal}
\end{figure}

Fig.~\ref{fig:LOR} shows the performance of \dpalgo on 1-\prob with $2$-$5$ buffers.
In terms of computation time, the dominant component of \dpalgo is TaskSequenceMerge, 
which is polynomial in $m$ in the worst case. As a result, \dpalgo is highly scalable. 
Regarding optimality, compared with the single buffer setting, lattice rearrangement with two buffers saves travel distance by $25\%-30\%$. For $3$ buffers, the saving is about $30\%-35\%$. Adding more buffers beyond three, however, provides negligible additional benefits as predicted by our theoretical analysis.
That is, as the largest three cycles cover more than $93\%$ of objects and each cycle requires at most one unique buffer, additional buffers can barely make any difference.
\begin{figure}[ht]
    \centering
    \includegraphics[width=0.46\textwidth]{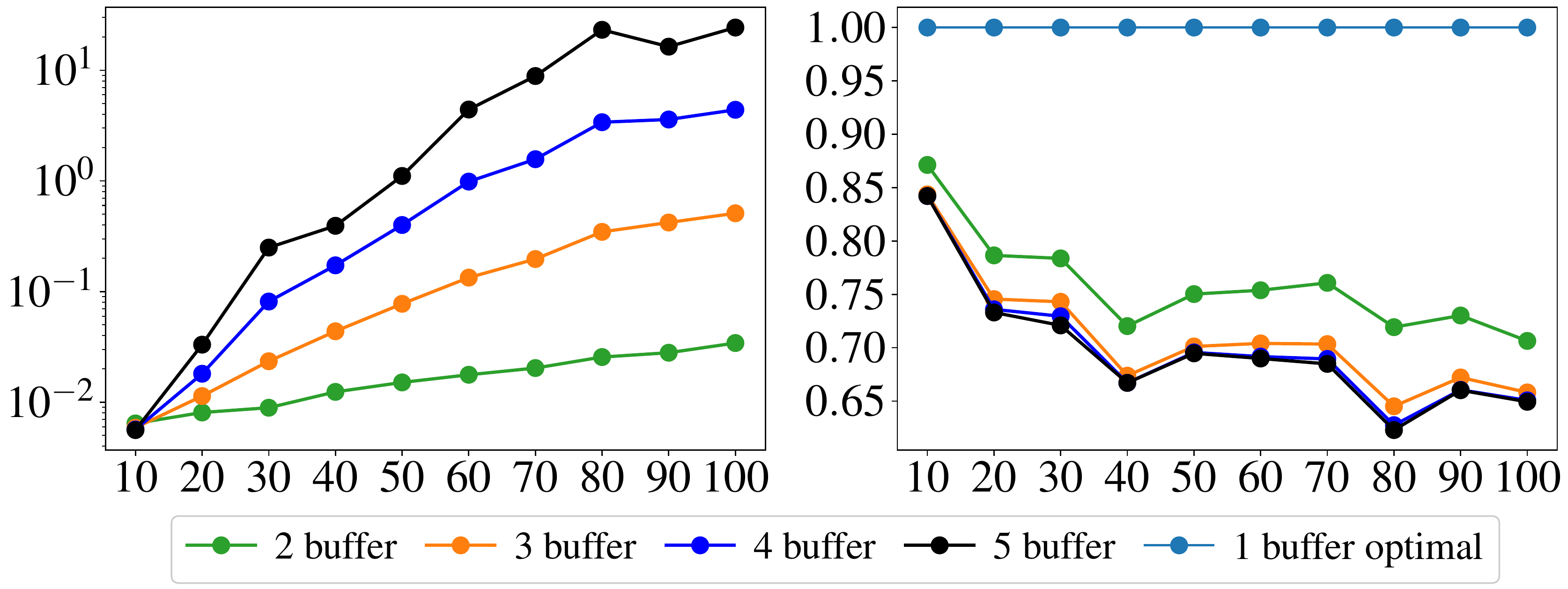}
    \caption{Performance of \dpalgo on 1-\prob instances with different numbers of buffers and objects. [left] Computation time (secs). [right] Distance cost as a proportion of that in optimal solutions with one buffer.}
    \label{fig:LOR}
\end{figure}

Fig.~\ref{fig:LTR} shows the performance of \dpalgo in 2-\prob.
Compared with single buffer rearrangement, 2-\prob solutions with two buffers save distance cost by $15\%-20\%$ and those with three or more buffers save distance cost by $20\%-25\%$.
While there is slightly less efficiency gain using multiple buffers in 2-\prob, the diminishing return from additional buffers is consistent with the results in 1-\prob.
The results show the efficiency of \dpalgo in 2-\prob and the effectiveness of robot arms with 2 or 3 buffers in two-dimensional lattices.

\begin{figure}[ht]
\vspace{1mm}
    \centering
    \includegraphics[width=0.46\textwidth]{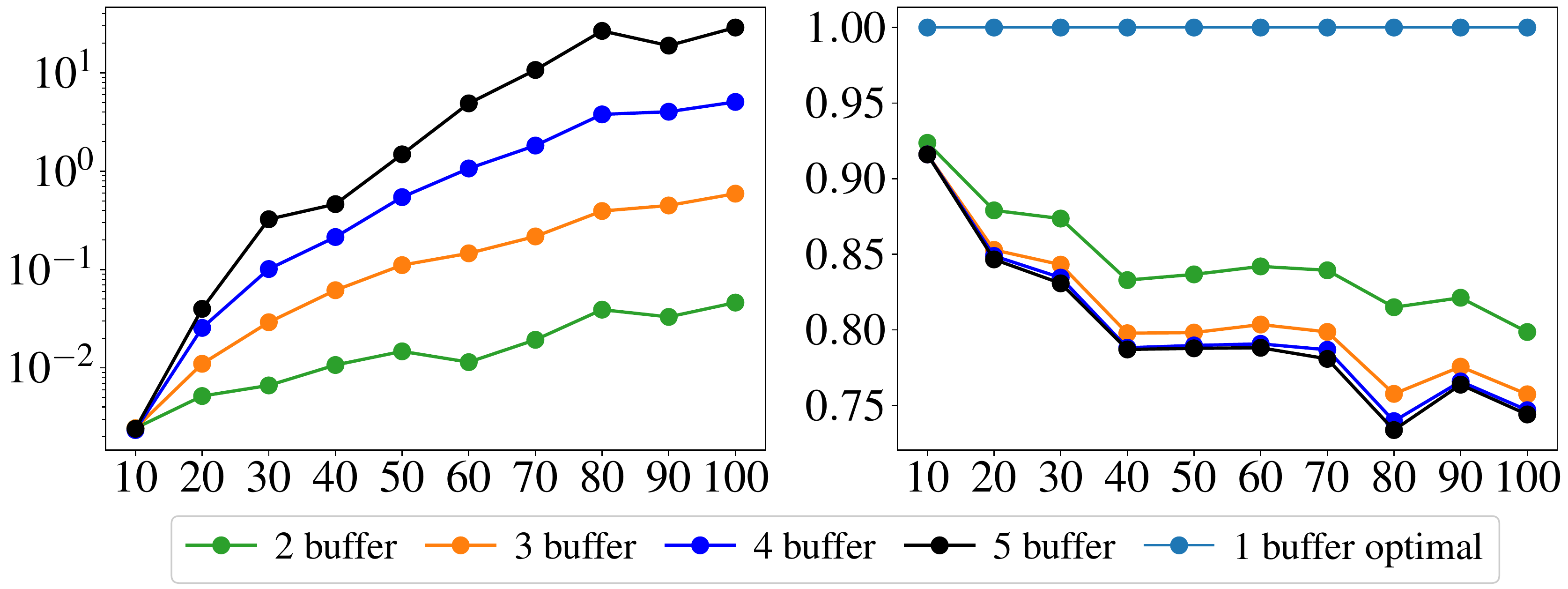}
    \caption{Performance of \dpalgo in 2-\prob instances with different number of objects. [left] Computation time (secs). [right] Distance cost as a proportion of that in optimal solutions with one buffer.}
    \label{fig:LTR}
\end{figure}
\subsection{\prob with General Objective Functions}
To evaluate the performance of \mctsalgo, we solve \prob with two different cost functions where the execution time of pick-n-swaps is not dominant in the rearrangement process.
When $c_p:c_t = 1:1$, both the number of pick-n-swaps and the traveling distance affect the cost in a computed solution.
When $c_p:c_t = 1:10^5$, the traveling cost is dominant.
Fig.~\ref{fig:mcts} shows the cost of solutions computed by \mctsalgo as a proportion of that in the optimal single-buffer solutions.
As the number of objects increases, rearrangement with multiple buffers has the potential to save more cost on traveling.
However, in the meanwhile, the computation time for each step of the decision decreases. 
Therefore, when $m=20$, less saving on traveling is seen by using multiple buffers.

\begin{figure}[ht]
\vspace{1mm}
    \centering
    \includegraphics[width=0.46\textwidth]{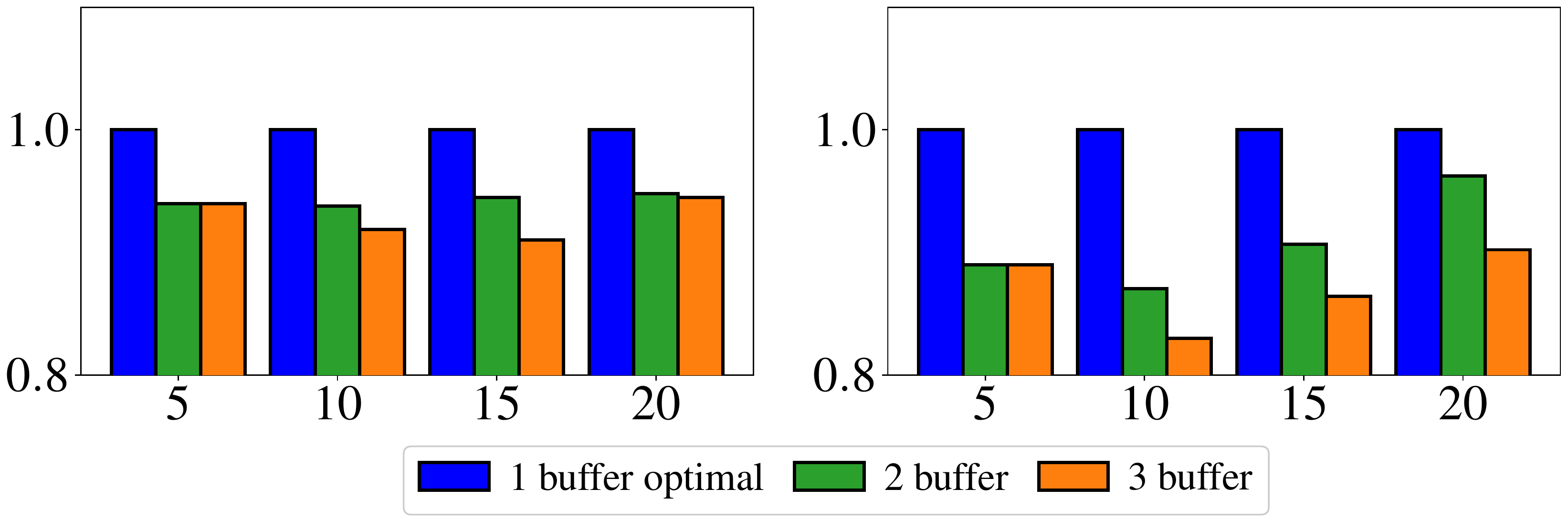}
    \caption{Cost of solutions computed by \mctsalgo as a proportion of that in optimal single-buffer solutions.[Left] $c_p:c_t = 1:1$, [Right] $c_p:c_t = 1:10^5$. }
    \label{fig:mcts}
\end{figure}


\section{Conclusion}\label{sec:conclusion}
In this work, we study the problem of employing multiple buffers to perform pick-n-swap-based rearrangement on lattices where the robot arm can hold multiple objects simultaneously when traveling.
In analyzing the effectiveness of additional swap spaces, we prove multiple structural results on parallelizing rearrangement tasks with multiple buffers.
Specifically, using the Golomb-Dickman constant, we establish an exponentially diminishing return in each additional buffer when the number of pick-n-swaps is minimized. We also provide a rudimentary proof of how a tight lower bound of the Golomb-Dickman constant can be obtained.
Using the obtained properties, we propose an efficient algorithm minimizing the number of pick-n-swaps based on dynamic programming.
For general cost functions, we also propose a Monte Carlo Tree Search-based method.
Experiments show the effectiveness of using multiple buffers compared with using one buffer in lattice rearrangement problems and further verify the properties we obtain.
\newpage
\bibliographystyle{format/IEEEtran}
\bibliography{bib/c}

\end{document}